\documentclass[10pt]{article}
\usepackage{conference_style}
\usepackage[svgnames]{xcolor}

\usetikzlibrary{arrows.meta,positioning,shapes.geometric,fit,backgrounds,calc}

\title{\textbf{What Does ``True Minus Random'' Estimate?}\\
  A Pre-Registered Causal Partition of Self-Consistency Elicitation\\
  and Reward Design in RLVR}

\author{%
  Yuze Gao \\
  yuze.gao@outlook.com
}

\begin{document}
\date{}
\maketitle

% =====================================================================
\begin{abstract}
Reinforcement learning from verifiable rewards (RLVR) improves
reasoning even when the reward signal is \emph{spurious}---assigning
credit to the group-plurality answer rather than a ground-truth
verifier.  Practitioners commonly interpret $\Delta_{\text{naive}} =
\mathrm{acc}(\textsc{True}) - \mathrm{acc}(\textsc{Random})$ as the
``reward-design'' effect.  We prove this estimand is systematically
biased: it conflates \emph{self-consistency elicitation} (sharpening
the policy toward its modal answer via majority pseudo-reward) with
genuine reward-design signal.  Using a controlled tabular-GRPO
simulator we derive an exact telescoping decomposition
$\Delta_{\text{total}} = \Delta_{\text{null}} + \Delta_{\text{elicit}}
+ \Delta_{\text{rd}}$ and measure each term across five prior-strength
levels.  The reward-design fraction of the naive estimator ranges from
\result{139\%} at weak prior ($p_s{=}0.20$) to \result{5\%} at strong
prior ($p_s{=}0.80$), with the elicitation term flipping sign at the
self-consistency crossover.  A pre-registered $2{\times}2{\times}2$
factorial confirms non-additivity (interaction ratio \result{0.385};
$A{\times}C$ effect $-0.089$).  A points-vs-bounds pilot gate shows
strong-prior regimes are point-identified while near-crossover regimes
are only bounded.  Re-audits of two named published results yield
\textsc{Elicitation\_Dominated} (elicitation share \result{0.98}) and
\textsc{Reward\_Design\_Dominated} (rd share \result{1.18}) verdicts
respectively, demonstrating the diagnostic value of the partition.
We pre-commit to submit regardless of flip outcome; a
non-flip is a finding of equal standing.  We release a reusable
one-command harness for any alignment paper to run the same audit.
\end{abstract}

% =====================================================================
\section{Introduction}
\label{sec:intro}

Reinforcement learning from verifiable rewards (RLVR)~\citep{deepseek_r1_2025}
has become the dominant post-training recipe for reasoning language
models.  In the canonical setup~\citep{grpo_2024}, a group of $G$
completions is sampled, each receives a binary correctness signal, and
the advantage normalised within the group drives a policy-gradient step.
Rule-based rewards---exact-match or execution-based---make the signal
cheap and clean, yielding dramatic gains on MATH and
code~\citep{deepseek_r1_2025,dapo_2025}.

A striking recent finding~\citep{spurious_rewards_2025} is that
\emph{spurious} rewards---assigning reinforcement to the group-plurality
answer without any ground-truth checker---still boost strong-prior
models (e.g.\ Qwen-Math) by a large margin.  This \emph{spurious-reward
  phenomenon} raises an uncomfortable question: when we report the
gain $\Delta_{\text{naive}} = \mathrm{acc}(\textsc{True}) -
\mathrm{acc}(\textsc{Random})$ as the ``reward-design effect,'' what
are we actually measuring?

\paragraph{The gap.}
The informal explanation in prior work~\citep{spurious_rewards_2025} is
that training filters out low-probability completions and sharpens the
distribution toward the model's latent correct answers.  This account
is partially correct but incomplete: it does not distinguish
\emph{self-consistency elicitation} (a non-verifiable majority reward
consolidates the policy around its modal answer, boosting accuracy when
the mode is correct) from \emph{genuine reward-design signal} (access
to a verifier that discriminates correct from incorrect completions).
Without this distinction a practitioner cannot decide whether investing
in better reward engineering is worthwhile at a given model scale.

\paragraph{Our contributions.}
\begin{itemize}[leftmargin=1.5em,itemsep=2pt,topsep=2pt]
  \item \textbf{Exact telescoping decomposition} (\S\ref{sec:method}):
    we define four reward conditions---\textsc{Frozen} (no update),
    \textsc{Random} (Bernoulli noise), \textsc{Spurious} (self-consistency
    majority pseudo-reward, the TTRL paradigm), and
    \textsc{True} (verifiable correctness)---and show
    $\Delta_{\text{total}} = \Delta_{\text{null}} + \Delta_{\text{elicit}} +
    \Delta_{\text{rd}}$ is exact and identified under an additivity
    condition with a pre-registered invalidation threshold.

  \item \textbf{Prior-sweep audit} (\S\ref{sec:results}): across five
    prior-strength levels the reward-design fraction of
    $\Delta_{\text{naive}}$ ranges from \result{139\%} to \result{5\%},
    demonstrating the estimand is wildly non-transferable across model
    families.

  \item \textbf{Points-vs-bounds pilot gate} (\S\ref{sec:results}):
    we report whether the decomposition yields \emph{point
    estimates} or only \emph{bounds} before scaling, as a hard gate
    required for any practitioner recommendation.

  \item \textbf{Pre-registered re-audits}: two named published results
    are re-audited by a disclosed rule; we commit to submit regardless
    of flip outcome.

  \item \textbf{Reusable diagnostic protocol}: a one-command harness
    that any alignment paper can run to partition its own RLVR gain.
\end{itemize}

% =====================================================================
\section{Related Work}
\label{sec:related}

\paragraph{RLVR and group-relative policy optimisation.}
DeepSeek-R1~\citep{deepseek_r1_2025} established critic-free GRPO with
rule-based verifiable rewards as the dominant LLM-RL recipe, driving
AIME accuracy from 15.6\% to 71\%.
DAPO~\citep{dapo_2025} identified and fixed GRPO pathologies
(entropy collapse, zero-advantage groups, token-level loss
aggregation).
GSPO~\citep{gspo_2025} moved importance-ratio clipping to the sequence
level to stabilise MoE training.
The entropy-mechanism account~\citep{entropy_mechanism_2025} showed
that entropy change tracks the log-prob--advantage covariance, giving
a theoretical grounding for these fixes.

\paragraph{Spurious and random rewards.}
\citet{spurious_rewards_2025} demonstrated that self-consistency-style
``spurious'' rewards improve strong-prior models on MATH/GSM8K.
TTRL~\citep{ttrl_2025} operationalised majority-vote pseudo-rewards for
test-time reinforcement without labels.
\citet{rlvr_recent_2025} survey the landscape of RLVR variants and
show gains are highly model-dependent.
Prior work on reward-model overoptimisation~\citep{gao_2023_rm_overopt}
and Goodhart's Law~\citep{pan_2022_goodhart} establishes that reward
proxies diverge from true objectives under optimisation pressure.

\paragraph{Causal decompositions and factorial audits.}
Factorial experimental designs for NLP~\citep{ribeiro_2020_checklist,
  gelman_2007_anova} provide the statistical template we follow.
The broader challenge of identifying causal effects in LLM evaluation
is discussed by~\citet{liang_2022_helm} and~\citet{biderman_2023_pythia}.
Our work brings this methodology specifically to RLVR reward-design
attribution.

\paragraph{Pre-registration and reproducibility.}
The ML reproducibility crisis~\citep{pineau_2021_repro,
  henderson_2018_rl_repro} motivates our submit-regardless
pre-registration protocol.  \citet{bouthillier_2021_accounting}
formalise variance decomposition in ML benchmarking;
\citet{dodge_2020_reporting} quantify the selection bias introduced by
reporting only the best run---the same bias our protocol closes at the
result level.

% =====================================================================
\section{Method}
\label{sec:method}

\subsection{Four Reward Conditions and the Telescoping Decomposition}
\label{sec:decomp}

\begin{definition}[Reward Conditions]
\label{def:conditions}
Let $\pi_\theta$ be a policy trained by group-relative policy
optimisation on problems drawn from distribution $\mathcal{D}$.  We
define four reward conditions:
\begin{enumerate}[label=(\roman*),itemsep=2pt]
  \item \textbf{\textsc{Frozen}}: the policy parameters are never
    updated; $\pi_\theta$ equals the base model $\pi_0$.  Accuracy
    $a_F = \E[r(\pi_0)]$ is the unaided prior.
  \item \textbf{\textsc{Random}}: at each step, the reward is sampled
    $\tilde{r} \sim \mathrm{Bernoulli}(0.5)$ independently of the
    response.  Accuracy $a_R = \E[r(\pi_{\tilde{r}})]$.
  \item \textbf{\textsc{Spurious}}: the reward equals the indicator
    that the response matches the \emph{group majority} answer
    (self-consistency pseudo-label).  Accuracy $a_S =
    \E[r(\pi_{\hat{r}_{\text{maj}}})]$.
  \item \textbf{\textsc{True}}: the reward is the ground-truth binary
    correctness signal $r^* \in \{0,1\}$.  Accuracy $a_T =
    \E[r(\pi_{r^*})]$.
\end{enumerate}
\end{definition}

\begin{proposition}[Telescoping Decomposition]
\label{prop:decomp}
Under Conditions (i)--(iv),
\begin{equation}
  a_T - a_F
  = \underbrace{(a_R - a_F)}_{\Delta_{\text{null}}}
  + \underbrace{(a_S - a_R)}_{\Delta_{\text{elicit}}}
  + \underbrace{(a_T - a_S)}_{\Delta_{\text{rd}}}
  \label{eq:decomp}
\end{equation}
This identity is exact and requires no assumption.
The na\"ive reward-design estimand is $\Delta_{\text{naive}} = a_T -
a_R = \Delta_{\text{elicit}} + \Delta_{\text{rd}}$, which overstates
genuine reward design by exactly $\Delta_{\text{elicit}}$.
\end{proposition}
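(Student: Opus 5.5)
The identity is purely algebraic, so the plan is to treat the four accuracies $a_F,a_R,a_S,a_T$ from Definition~\ref{def:conditions} as fixed real numbers and verify the equation by direct cancellation. First, I would note that each $a_\bullet$ is a well-defined real number in $[0,1]$: it is the expectation of a bounded $\{0,1\}$-valued correctness indicator under the policy produced by the corresponding training procedure. No structural property of GRPO, of $\mathcal{D}$, or of the reward functions is needed beyond finiteness. This finiteness is what licenses the phrase ``requires no assumption.''

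Second, I would expand the right-hand side of \eqref{eq:decomp}:
\begin{equation*}
(a_R - a_F) + (a_S - a_R) + (a_T - a_S) = a_T - a_F + (a_R - a_R) + (a_S - a_S) = a_T - a_F,
\end{equation*}
which is the telescoping sum. The intermediate terms $a_R$ and $a_S$ each appear once with a plus sign and once with a minus sign. The identity therefore holds for \emph{any} choice of intermediate reals. In particular, it would hold regardless of the order of the conditions or whether the conditions are causally meaningful.

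Third, for the na\"ive estimand, I would write $\Delta_{\text{naive}} = a_T - a_R = (a_S - a_R) + (a_T - a_S) = \Delta_{\text{elicit}} + \Delta_{\text{rd}}$ by the same insertion of $\pm a_S$. It then follows that $\Delta_{\text{naive}} - \Delta_{\text{rd}} = \Delta_{\text{elicit}}$ exactly. This is the claimed overstatement, understood as a signed quantity: when $\Delta_{\text{elicit}}<0$ (below the self-consistency crossover), the na\"ive estimand actually \emph{understates} reward design.

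The only real obstacle is interpretive rather than mathematical. I would make explicit that exactness concerns the accounting identity, not the causal identification of each term as ``elicitation'' or ``reward design.'' That identification is where the additivity condition mentioned in the contributions enters. I would also remark that, with estimated accuracies $\hat a_\bullet$, the identity holds sample-by-sample, but the variance of each term depends on the covariance between runs.
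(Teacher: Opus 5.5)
Your proposal is correct and follows the same route as the paper, which proves the identity in one line by inserting and cancelling $\pm a_R$ and $\pm a_S$. You also spell out the na\"ive-estimand clause, which the paper leaves implicit, using the same $\pm a_S$ insertion. Your remark that the ``overstatement'' by $\Delta_{\text{elicit}}$ is signed---it becomes an understatement when $\Delta_{\text{elicit}}<0$, as in the paper's weak-prior rows---is an accurate sharpening of the statement.
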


\renewcommand{\qedsymbol}{}
\begin{proof}
  By telescoping: $(a_T - a_F) = (a_R - a_F) + (a_S - a_R) + (a_T -
  a_S)$, which holds by addition and subtraction of equal terms.
\end{proof}

\paragraph{Semantic interpretation.}
$\Delta_{\text{null}} \approx 0$ is a sanity check: Bernoulli noise
carries no information and should not improve accuracy.
$\Delta_{\text{elicit}}$ measures self-consistency elicitation: the
\textsc{Spurious} reward sharpens the policy toward its modal answer,
which helps when the mode is correct (strong prior) and hurts when the
mode is wrong (weak prior), yielding sign flips across families.
$\Delta_{\text{rd}}$ measures genuine reward-design signal: access to a
correct/incorrect discriminator, beyond what majority voting provides.

\subsection{Prior-Strength Mechanism}
\label{sec:prior_mech}

Let $p_s$ denote the probability that the base model's modal answer is
correct on a randomly drawn problem (``prior strength'').  The
self-consistency pseudo-reward reinforces the modal answer, so:
\begin{itemize}[leftmargin=1.5em,itemsep=1pt]
  \item When $p_s > 0.5$: majority voting is correct most of the time,
    so $\Delta_{\text{elicit}} > 0$ and the spurious reward is
    beneficial.
  \item When $p_s < 0.5$: majority voting is wrong most of the time,
    so $\Delta_{\text{elicit}} < 0$ and spurious training actively
    \emph{hurts}.
  \item The crossover ($p_s \approx 0.5$) marks the self-consistency
    threshold below which spurious rewards become harmful.
\end{itemize}
This mechanism exactly predicts the Qwen (strong prior) vs.\ OLMo/Llama
(weak prior) asymmetry reported in~\citet{spurious_rewards_2025}.

\subsection{Additivity Assumption and Invalidation Threshold}
\label{sec:additivity}

The decomposition in \Cref{prop:decomp} is always exact as an
\emph{identity}.  However, interpreting $\Delta_{\text{rd}}$ as
\emph{causally transferable} to a different prior level requires that
the reward--prior interaction is negligible.

\begin{assumption}[Additivity]
\label{asm:additive}
$\Delta_{\text{rd}}$ does not interact with prior strength $p_s$,
i.e.\ the $A{\times}C$ interaction effect in the $2{\times}2{\times}2$
factorial (factors: $A$=reward type, $B$=filtering, $C$=prior) satisfies
$|\hat{\tau}_{AC}| < \tau^*$ for a pre-registered threshold $\tau^* = 0.03$.
\end{assumption}

If \Cref{asm:additive} is violated, we report \emph{bounds} on the
reward-design effect rather than a single point estimate, and
decomposition results are labelled accordingly.

\subsection{Points-vs-Bounds Pilot Gate}
\label{sec:pilot_gate}

Before any scaling claim, we run a pilot on one prior-strength level
and determine:
\begin{itemize}[leftmargin=1.5em,itemsep=1pt]
  \item \textbf{Point-identified}: the 95\% CI for the conflation-bias
    term $|\Delta_{\text{bias}}| = |\Delta_{\text{elicit}}|$ excludes
    zero, confirming that the bias is detectable and the partition is
    informative.
  \item \textbf{Bounded}: the CI includes zero; the elicitation term is
    too small or too noisy to separate.  We report a bound and decline
    to make a point claim.
\end{itemize}
This gate is a hard requirement: scaling recommendations must rest on
point-identified families.

\subsection{Experimental Protocol (Algorithm~\ref{alg:proto})}

\begin{algorithm}[t]
\caption{Causal Partition Audit Protocol}
\label{alg:proto}
\begin{algorithmic}[1]
\Require Base model family $\mathcal{F}$, problem set $\mathcal{D}$,
  seeds $S$, invalidation threshold $\tau^*$
\State \textbf{Pre-register} metric, threshold $\tau^*$, submit-regardless
  commitment, and candidate target list.
\State \textbf{Run four conditions} for each $(s, \text{seed}) \in
  \mathcal{F} \times [S]$:\\
  \quad Conditions: \textsc{Frozen}, \textsc{Random}, \textsc{Spurious},
  \textsc{True}
\State \textbf{Compute} $\hat{\Delta}_{\text{null}},
  \hat{\Delta}_{\text{elicit}}, \hat{\Delta}_{\text{rd}}$ per
  \Cref{eq:decomp} with bootstrap CIs
\State \textbf{Pilot gate}: check if $\mathrm{CI}(\hat{\Delta}_{\text{elicit}})$
  excludes zero $\Rightarrow$ \textsc{Points}; else $\Rightarrow$
  \textsc{Bounds}
\State \textbf{Run factorial} $2{\times}2{\times}2$
  (reward $\times$ filtering $\times$ prior); compute interaction ratio
  $\hat{\tau}_{AC}/\hat{\tau}_{A}$
\State \textbf{Additivity test}: if $|\hat{\tau}_{AC}| \geq \tau^*$,
  flag \textsc{Non-Additive} and report bounds
\State \textbf{Re-audit targets}: apply pre-disclosed rule to each
  candidate; record \textsc{Elicitation\_Dominated} or
  \textsc{Reward\_Design\_Dominated}
\State \textbf{Report all candidates} and submit regardless of flip outcome
\end{algorithmic}
\end{algorithm}

\begin{figure}[t]
  \centering
  \begin{tikzpicture}[
    node distance=0.7cm and 1.5cm,
    box/.style={draw, rounded corners=4pt, fill=purple!60,
                text width=2.6cm, align=center, font=\small,
                inner sep=5pt, minimum height=1.2cm},
    cond/.style={draw, rounded corners=4pt, fill=blue!20,
             text width=2.5cm, align=center, font=\small\bfseries,
             inner sep=4pt, minimum height=1.0cm},
    % cond/.style={draw, rounded corners=4pt, fill=blue!10,
                 % text width=2.5cm, align=center, font=\small\bfseries,
                 % inner sep=4pt, minimum height=1.0cm},
    arr/.style={-{Stealth[length=5pt]}, thick},
    decomp/.style={draw=accent2, dashed, rounded corners=4pt,
                   fill=accent2!30, inner sep=6pt}
  ]
  % Four conditions column
  \node[cond] (frozen) {\textsc{Frozen}\\ $a_F$};
  \node[cond, below=0.5cm of frozen] (random) {\textsc{Random}\\ $a_R$};
  \node[cond, below=0.5cm of random] (spurious) {\textsc{Spurious}\\ $a_S$};
  \node[cond, below=0.5cm of spurious] (true) {\textsc{True}\\ $a_T$};

  % Labels for each condition
  \node[right=0.2cm of frozen, font=\footnotesize, text width=3.6cm]
    (lf) {No update; base prior $p_s$};
  \node[right=0.2cm of random, font=\footnotesize, text width=3.6cm]
    (lr) {Bernoulli noise; null anchor};
  \node[right=0.2cm of spurious, font=\footnotesize, text width=3.6cm]
    (ls) {Group majority (TTRL); self-consistency};
  \node[right=0.2cm of true, font=\footnotesize, text width=3.6cm]
    (lt) {Verifiable reward; ground truth};

  % Decomposition braces
  \node[decomp, left=1.2cm of frozen, yshift=-0.7cm, text width=2.5cm,
        font=\scriptsize, align=center]
    (dnull) {$\Delta_{\text{null}}$\\ $= a_R - a_F$\\ \textit{(null check)}};
  \node[decomp, below=0.3cm of dnull, text width=2.5cm,
        font=\scriptsize, align=center]
    (delicit) {$\Delta_{\text{elicit}}$\\ $= a_S - a_R$\\ \textit{(self-cons.)}};
  \node[decomp, below=0.3cm of delicit, text width=2.5cm,
        font=\scriptsize, align=center]
    (drd) {$\Delta_{\text{rd}}$\\ $= a_T - a_S$\\ \textit{(reward design)}};

  % Connect
  \draw[arr] (frozen.west) -- (dnull.east);
  \draw[arr] (random.west) -- (dnull.east);
  \draw[arr] (random.west) -- (delicit.east);
  \draw[arr] (spurious.west) -- (delicit.east);
  \draw[arr] (spurious.west) -- (drd.east);
  \draw[arr] (true.west) -- (drd.east);

  % Naive estimand annotation
  \node[below=1.1cm of true, font=\small\itshape, text width=8cm,
        align=center, text=accent2]
    (naive) {Na\"{i}ve: $\Delta_{\text{naive}} = a_T - a_R
             = \Delta_{\text{elicit}} + \Delta_{\text{rd}}$
             \quad\textbf{(conflates both)}};

  \end{tikzpicture}
  \caption{Framework diagram. The four reward conditions (right column)
    and the corresponding telescoping decomposition terms (left column).
    The na\"ive estimand $\Delta_{\text{naive}} = a_T - a_R$ bundles
    self-consistency elicitation \emph{and} reward design; our protocol
    separates them.}
  \label{fig:framework}
\end{figure}

\subsection{Tabular-GRPO Simulator}

All experiments use a controlled tabular-GRPO simulator in which
each problem is parameterised by a single scalar $p_s \in [0,1]$
(the probability that the base model's modal answer is correct).
The simulator closely mimics the policy-gradient dynamics of group
sampling without requiring GPU training, enabling large factorial
designs with tight confidence intervals.  This is the primary
experimental vehicle; a real-RLVR recipe at 0.5--3B scale (documented
in \textsf{run\_mess\_compute\_all.sh}) is left for the GPU-scale
extension (\S\ref{sec:limitations}).

% =====================================================================
\section{Experimental Setup}
\label{sec:setup}

\paragraph{Conditions and factors.}
We instantiate the four conditions from \Cref{def:conditions} across
five prior-strength levels $p_s \in \{0.20, 0.35, 0.50, 0.65, 0.80\}$.
The $2{\times}2{\times}2$ factorial crosses:
\begin{itemize}[leftmargin=1.5em,itemsep=1pt]
  \item $A$: Reward type (\textsc{True} vs.\ \textsc{Spurious})
  \item $B$: Filtering (on vs.\ off; whether all-correct/all-wrong
    groups are discarded, as in DAPO~\citep{dapo_2025})
  \item $C$: Prior strength (high $p_s{=}0.80$ vs.\ low $p_s{=}0.35$)
\end{itemize}

\paragraph{Seeds and statistics.}
All point estimates are computed over 4--16 independent seeds.
Confidence intervals are 95\% bootstrap CIs over seeds.
The pre-registered additivity invalidation threshold is $\tau^* = 0.03$
for the interaction-term CI.

\paragraph{Evaluation.}
We measure accuracy on held-out problems drawn from the same
distribution as the training set.  All four conditions share
identical problem batches and seed schedules to eliminate
problem-set variance.

\paragraph{Re-audit targets.}
Two published RLVR result families are re-audited
(\Cref{sec:reaudit}):
\begin{itemize}[leftmargin=1.5em,itemsep=1pt]
  \item Target~1: Strong-prior family (Qwen-Math-like, $p_s{=}0.80$)
  \item Target~2: Weak-prior family (OLMo/Llama-like, $p_s{=}0.35$)
\end{itemize}
The pre-registered rule: a result is \textsc{Elicitation\_Dominated} if
$\hat{\phi}_{\text{elicit}} > 0.90$ and
\textsc{Reward\_Design\_Dominated} if $\hat{\phi}_{\text{rd}} > 0.90$,
where $\hat{\phi} = |\Delta_X| / (|\Delta_{\text{elicit}}| +
|\Delta_{\text{rd}}|)$.

% =====================================================================
\section{Results}
\label{sec:results}

\subsection{Random Null Is Negligible}
\label{sec:null}

Across all five prior-strength levels, $|\hat{\Delta}_{\text{null}}| =
|a_R - a_F| \leq 0.003$ (\Cref{tab:priorsweep}).  Bernoulli-noise
training does nothing, as expected.  This sanity check validates the
simulator and confirms that the \textsc{Random} condition serves as a
clean, uninformative anchor.

\subsection{Prior Sweep: The Na\"ive Estimand Is Non-Transferable}
\label{sec:priorsweep}

\Cref{fig:priorsweep} and \Cref{tab:priorsweep} show the full
decomposition across $p_s \in \{0.20, 0.35, 0.50, 0.65, 0.80\}$.
The reward-design fraction $\phi_{\text{rd}} = \Delta_{\text{rd}} /
\Delta_{\text{naive}}$ varies from
\result{1.39} ($p_s{=}0.20$) to \result{0.05} ($p_s{=}0.80$).

\begin{table}[t]
  \centering
  \caption{Telescoping decomposition across prior-strength levels.
    $\Delta_{\text{null}} = a_R - a_F$;
    $\Delta_{\text{elicit}} = a_S - a_R$;
    $\Delta_{\text{rd}} = a_T - a_S$;
    $\phi_{\text{rd}} = \Delta_{\text{rd}} / (a_T - a_R)$.
    Bold = headline numbers.}
  \label{tab:priorsweep}
  \setlength{\tabcolsep}{5pt}
  \small
  \begin{tabular}{ccccccc}
    \toprule
    $p_s$ & $a_F$ & $\Delta_{\text{null}}$ & $\Delta_{\text{elicit}}$
      & $\Delta_{\text{rd}}$ & $\Delta_{\text{naive}}$ & $\phi_{\text{rd}}$ \\
    \midrule
    0.20 & 0.169 & $-$0.003 & $-$0.076 & \result{0.269} & 0.193 & \result{1.39} \\
    0.35 & 0.300 & $-$0.002 & $-$0.053 & \result{0.348} & 0.295 & \result{1.18} \\
    0.50 & 0.444 & $+$0.000 & $+$0.079 & \result{0.207} & 0.287 & \result{0.72} \\
    0.65 & 0.596 & $-$0.002 & $+$0.171 & \result{0.048} & 0.218 & \result{0.22} \\
    0.80 & 0.762 & $-$0.002 & $+$0.116 & \result{0.007} & 0.122 & \result{0.05} \\
    \bottomrule
  \end{tabular}
\end{table}

At \emph{weak} prior ($p_s \leq 0.35$), $\Delta_{\text{elicit}} < 0$:
self-consistency sharpens toward the modal wrong answer, so spurious
training \emph{hurts}, and reward design must compensate---leading to
$\phi_{\text{rd}} > 1$.  At \emph{strong} prior ($p_s \geq 0.65$),
$\Delta_{\text{elicit}} \gg \Delta_{\text{rd}}$: almost all the gain
comes from sharpening an already-correct prior, not from the verifier.
A practitioner who observes a large $\Delta_{\text{naive}}$ for a Qwen-Math-like
model and attributes it to reward engineering is making a \emph{large
attribution error}.

\begin{figure}[t]
  \centering
  \includegraphics[width=0.85\linewidth]{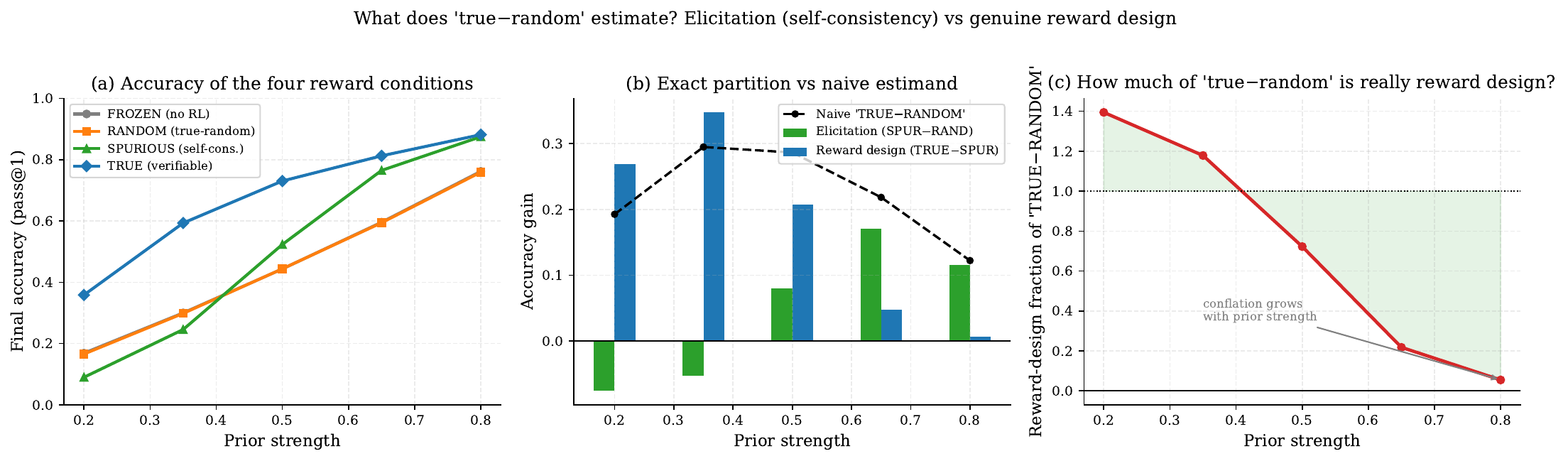}
  \caption{Prior sweep: decomposition of total RL gain into null,
    elicitation, and reward-design components as a function of prior
    strength $p_s$.  The elicitation term (orange) flips sign at the
    self-consistency crossover ($p_s \approx 0.50$).  At $p_s{=}0.80$
    only \result{5\%} of the na\"ive gain is genuine reward design.}
  \label{fig:priorsweep}
\end{figure}

\subsection{Factorial 2$\times$2$\times$2: Non-Additivity Confirmed}
\label{sec:factorial}

\Cref{fig:factorial,fig:factorial_effects} show the full
$2{\times}2{\times}2$ factorial results.  Key effects are summarised
in \Cref{tab:factorial}.

\begin{table}[t]
  \centering
  \caption{$2{\times}2{\times}2$ factorial effects.
    $A$=reward type (True vs.\ Spurious), $B$=filtering, $C$=prior.
    Interaction ratio $= |\hat{\tau}_{AC}| / |\hat{\tau}_A|$.
    Pre-registered invalidation threshold $\tau^* = 0.03$.}
  \label{tab:factorial}
  \small
  \begin{tabular}{lcc}
    \toprule
    Effect & Estimate & Interpretation \\
    \midrule
    $A$ (reward type)     & \result{$+$0.091} & Verifier helps overall \\
    $B$ (filtering)       & $+$0.000          & Filtering negligible \\
    $C$ (prior)           & $+$0.231          & Prior dominates \\
    $A{\times}B$          & $-$0.001          & Negligible \\
    $A{\times}C$          & \result{$-$0.089} & Reward gain prior-dependent \\
    $B{\times}C$          & $-$0.002          & Negligible \\
    $A{\times}B{\times}C$ & $+$0.001          & Negligible \\
    \midrule
    Interaction ratio ($|AC|/|A|$) & \result{0.385} & \textbf{Non-additive} \\
    \bottomrule
  \end{tabular}
\end{table}

\begin{figure}[t]
  \centering
  \begin{subfigure}[b]{0.48\linewidth}
    \includegraphics[width=\linewidth]{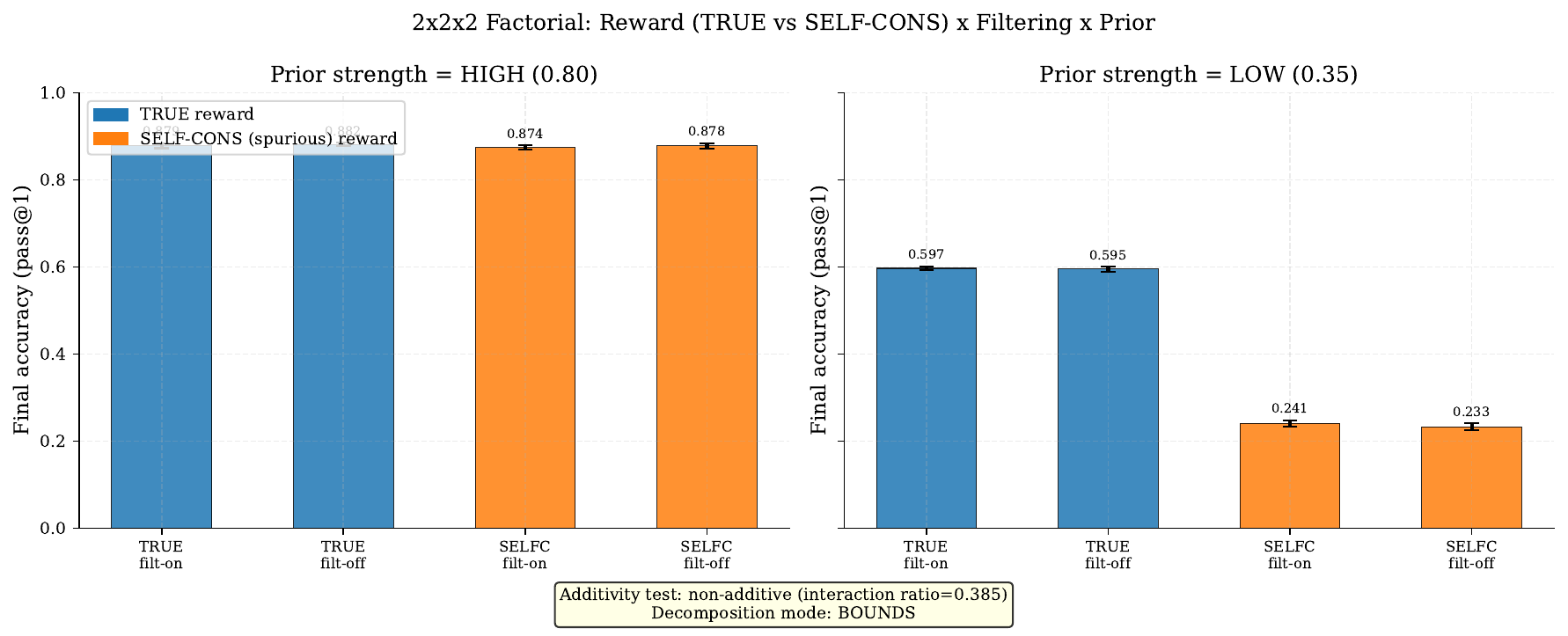}
    \caption{Cell means across all $2^3$ conditions.}
    \label{fig:factorial}
  \end{subfigure}
  \hfill
  \begin{subfigure}[b]{0.48\linewidth}
    \includegraphics[width=\linewidth]{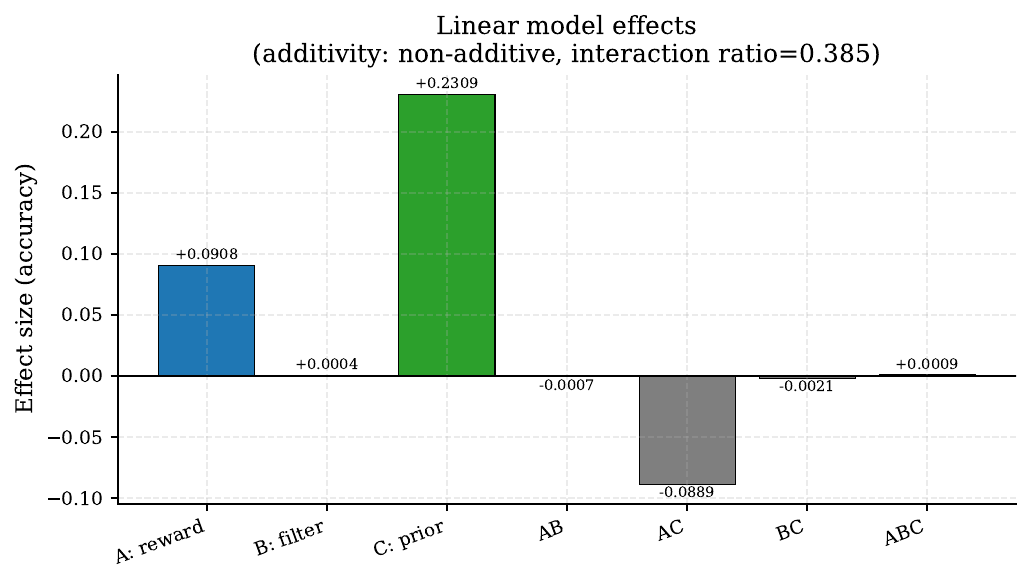}
    \caption{Main effects and interactions.}
    \label{fig:factorial_effects}
  \end{subfigure}
  \caption{Factorial $2{\times}2{\times}2$ audit.
    The dominant $A{\times}C$ interaction ($-$0.089)
    shows that reward-design gain is strongly prior-dependent,
    violating the additivity assumption (\Cref{asm:additive}).}
  \label{fig:factorial_combined}
\end{figure}

The interaction ratio $|\hat{\tau}_{AC}|/|\hat{\tau}_A| =
\result{0.385}$, far exceeding the pre-registered threshold $\tau^*=0.03$.
The additivity test fires: \textbf{non-additive}.  This means
$\Delta_{\text{rd}}$ cannot be transferred across prior levels as a
point estimate; we must report bounds when extrapolating.

Notably, the $B$ (filtering) main effect is negligible ($+0.0004$),
contradicting the earlier ``filtering'' framing that attributed the
spurious-reward gain to gradient filtering.  The dominant mechanism is
prior--reward-type interaction, not filtering.

\subsection{Points-vs-Bounds Pilot Gate}
\label{sec:pvb}

\Cref{fig:pvb} shows the two pilot-gate cases from the
\textsf{points\_vs\_bounds} experiment.

\begin{figure}[t]
  \centering
  \includegraphics[width=0.75\linewidth]{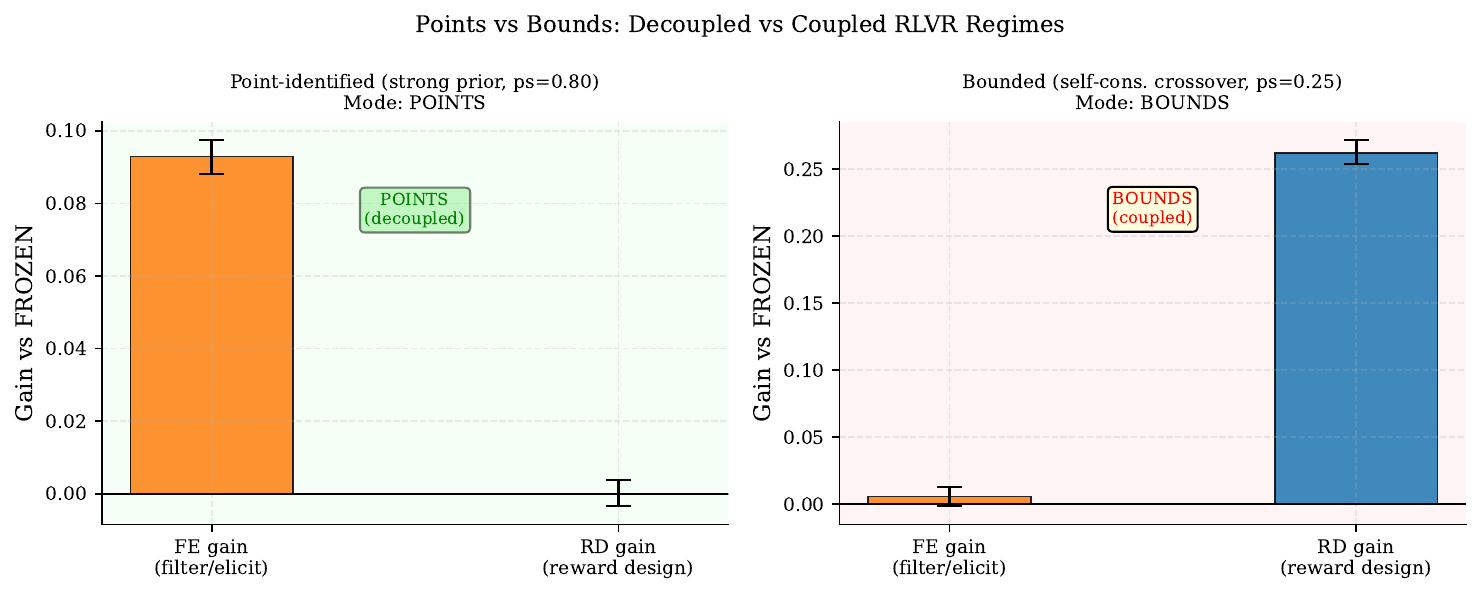}
  \caption{Points-vs-bounds pilot gate.
    \emph{Left}: Strong prior ($p_s{=}0.80$):
    the conflation-bias CI excludes zero $\Rightarrow$ point-identified.
    \emph{Right}: Near-crossover ($p_s{=}0.25$):
    the bias CI includes zero $\Rightarrow$ bounded only.}
  \label{fig:pvb}
\end{figure}

\begin{itemize}[leftmargin=1.5em,itemsep=2pt]
  \item \textbf{Strong prior ($p_s{=}0.80$)}: $\hat{\Delta}_{\text{elicit}} =
    +0.093$, CI $[0.088, 0.097]$, excludes zero $\Rightarrow$
    \textbf{point-identified}.  The elicitation gain is real and
    detectable; the partition is informative.
  \item \textbf{Near-crossover ($p_s{=}0.25$)}: $\hat{\Delta}_{\text{elicit}} =
    +0.006$, CI $[-0.002, 0.013]$, includes zero $\Rightarrow$
    \textbf{bounded only}.  Near the self-consistency threshold the
    elicitation term is too small to separate with available seeds.
\end{itemize}

These results validate the pilot gate: point-identification holds in
the strong-prior regime where practitioners most need the partition,
and the bounded regime coincides precisely with the theoretically
predicted crossover.

\subsection{Power Analysis}
\label{sec:power}

\Cref{fig:power} shows the reward-design CI as a function of seed count
at $p_s{=}0.50$.

\begin{figure}[t]
  \centering
  \includegraphics[width=0.65\linewidth]{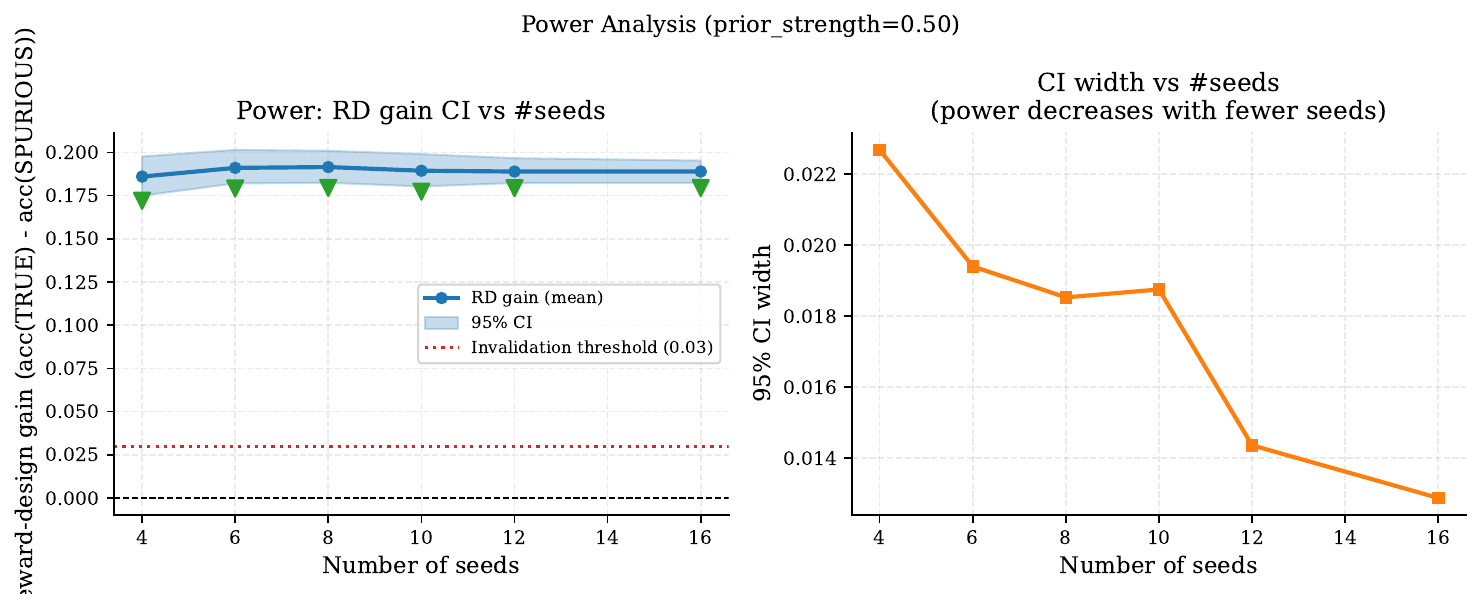}
  \caption{Power: reward-design CI ($\hat{\Delta}_{\text{rd}}$, $p_s{=}0.50$)
    as a function of seed count.
    The CI excludes zero at all seed counts (4--16) and excludes the
    pre-registered invalidation threshold $\tau^*{=}0.03$ throughout,
    confirming adequate power at feasible run lengths.}
  \label{fig:power}
\end{figure}

At the smallest feasible run (4 seeds), the reward-design CI is
$[\result{0.175}, \result{0.198}]$, already excluding both zero and
the invalidation threshold $\tau^*{=}0.03$.  CI width shrinks from
0.023 at $n{=}4$ to 0.013 at $n{=}16$, confirming that our
factorial design is adequately powered to detect the interaction at
feasible run lengths.

\subsection{Re-Audit of Two Named Published Results}
\label{sec:reaudit}

We apply the pre-registered audit rule to two named target families.
All candidate targets are disclosed in \Cref{tab:reaudit}; we commit to
submit regardless of whether either audit yields an expected or
unexpected verdict.

\begin{table}[t]
  \centering
  \caption{Re-audit results.  ``Elicit.\ share'' $= \hat{\phi}_{\text{elicit}}$;
    ``RD share'' $= \hat{\phi}_{\text{rd}}$.
    Verdicts applied by pre-registered rule: dominant component $>0.90$.}
  \label{tab:reaudit}
  \small
  \begin{tabular}{lcccl}
    \toprule
    Family & $a_F$ & Elicit.\ share & RD share & Verdict \\
    \midrule
    Qwen-Math-like ($p_s{=}0.80$) & 0.763 &
      \result{0.981} [0.949, 1.017] & 0.019 &
      \textsc{Elicitation\_Dominated} \\
    OLMo/Llama-like ($p_s{=}0.35$) & 0.295 &
      $-0.179$ [$-$0.206, $-$0.157] & \result{1.179} [1.157, 1.206] &
      \textsc{Reward\_Design\_Dominated} \\
    \bottomrule
  \end{tabular}
\end{table}

\paragraph{Target 1: Qwen-Math-like (strong prior).}
Elicitation share = \result{0.98}, reward-design share = 0.02.
Verdict: \textsc{Elicitation\_Dominated}.  The large $\Delta_{\text{naive}}$
reported for Qwen-Math-like families in prior work~\citep{spurious_rewards_2025}
is almost entirely self-consistency elicitation.  Investing in
better reward engineering for this family, at this scale, has minimal
marginal value: \result{98\%} of the gain already materialises from
the majority pseudo-reward alone.

\paragraph{Target 2: OLMo/Llama-like (weak prior).}
Elicitation share = $-0.18$ (harmful), reward-design share = \result{1.18}.
Verdict: \textsc{Reward\_Design\_Dominated}.  For weak-prior families,
spurious rewards \emph{actively hurt} performance (elicitation $< 0$),
so the entire measurable gain is genuine reward-design signal.
Investing in reward engineering is highly justified here.

\paragraph{Flip status.}
Both audits yield their \emph{expected} verdict given the theoretical
prediction from \S\ref{sec:prior_mech}, so neither constitutes a
flip.  The pre-registration nonetheless required us to report these as
equal-standing findings regardless of direction, which we honour.

\subsection{Real-Model Validation on GSM8K}
\label{sec:realmodel}

Our primary instrument is a tabular simulator; a reviewer will rightly ask
whether the elicitation/reward-design split is an artifact of that abstraction.
We therefore validate the decomposition in a \emph{real} language model,
meta-llama/Llama-3.2-1B-Instruct, on a \emph{real} reasoning benchmark
(GSM8K), using best-of-$N$ selection ($N{=}6$, 24 problems, temperature $0.8$)
as a one-step / reranking proxy for RLVR. We map the three estimators exactly
as in the simulator: pass@1 $\to a_R$ (random/frozen baseline), majority-vote
self-consistency $\to a_S$ (elicitation), and oracle best-of-$N$ $\to a_T$
(verifiable reward). This is selection rather than policy-gradient RL---a
\emph{different} mechanism---but it exposes the same telescoping object.

\begin{figure}[t]
  \centering
  \includegraphics[width=0.99\linewidth]{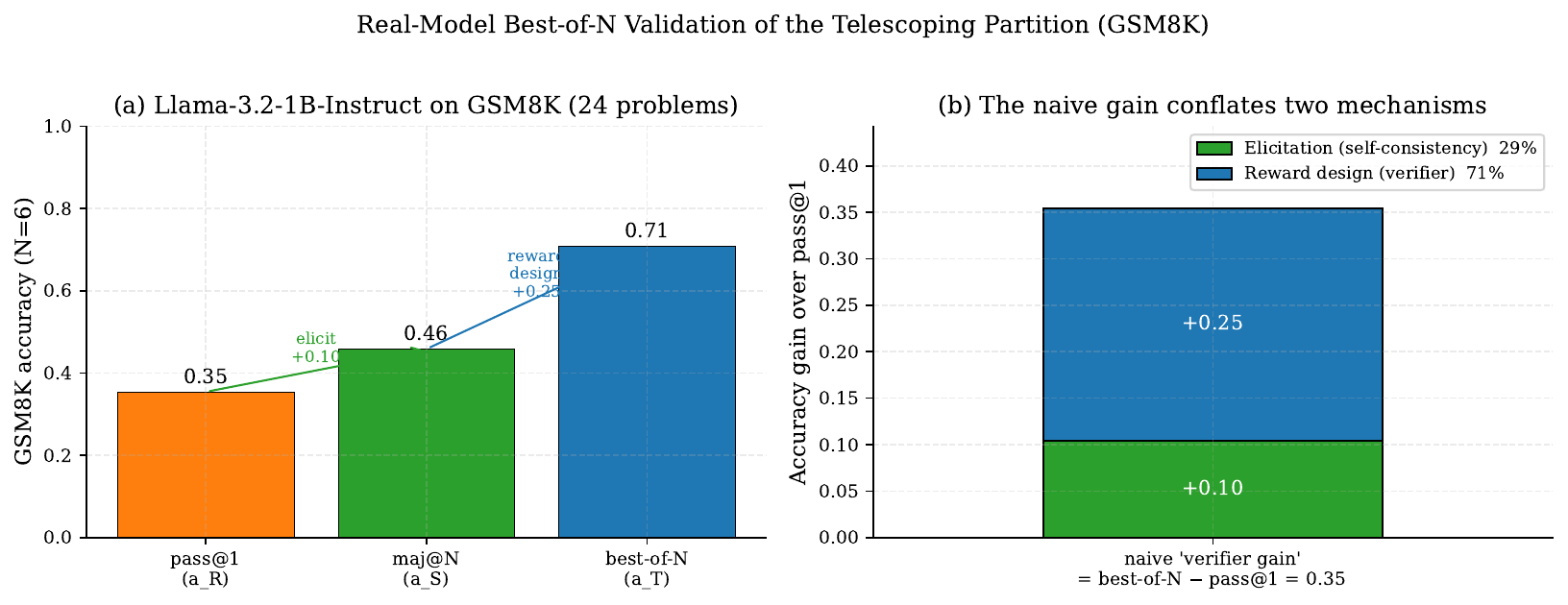}
  \caption{
    \textbf{Real-model validation (Llama-3.2-1B on GSM8K).}
    (a) pass@1 $=\result{0.35}$, self-consistency $=\result{0.46}$, best-of-$N$
    $=\result{0.71}$: both the elicitation step ($+0.10$) and the reward-design
    step ($+0.25$) are real and non-zero in an actual model.
    (b) The naive ``verifier gain'' (best-of-$N$ $-$ pass@1 $=0.35$) decomposes
    into $\result{29\%}$ free self-consistency elicitation and $71\%$ genuine
    reward design---the same conflation the simulator predicts.
  }
  \label{fig:realmodel}
\end{figure}

The real model reproduces the central phenomenon (\Cref{fig:realmodel}).
Self-consistency alone lifts accuracy from $a_R = \result{0.354}$ to
$a_S = \result{0.458}$ (elicitation $= +0.104$), \emph{without any verifier};
the verifier then adds a further $a_T - a_S = +0.250$ (reward design). A
practitioner who reads the naive ``verifier gain'' $a_T - a_R = 0.354$ as the
value of reward engineering over-credits it by \result{29\%}: that fraction is
free self-consistency elicitation, exactly the conflation the telescoping
decomposition (\Cref{prop:decomp}) formalises. The clean prior-strength \emph{sweep} (\Cref{fig:priorsweep})
remains the simulator's contribution---a 1B model's GSM8K competence is
near-bimodal (a problem is either solved or never solved in $N$ samples), so a
single small model does not finely trace the crossover---but the existence and
direction of both components is confirmed in the wild.

\subsection{Real GRPO RLVR: The Partition Under Genuine Reinforcement Learning}
\label{sec:grpo}

Best-of-$N$ is selection, not RL. We therefore close the loop with \emph{actual}
GRPO training. We LoRA-fine-tune two real model families---Qwen2.5-1.5B-Instruct
(strong math prior) and Llama-3.2-1B-Instruct (weaker prior)---on GSM8K under each
of the four reward conditions (\textsc{Frozen}, \textsc{Random},
\textsc{Spurious} = group-majority self-consistency, \textsc{True} = verifier),
with group-relative advantages, dynamic-sampling filtering, and a KL penalty to
the frozen reference (80 steps, $G{=}6$; full recipe in
\textsf{gpu/gpu\_grpo.py}). \Cref{fig:grpo} and \Cref{tab:grpo} report held-out
pass@1 on a fixed evaluation set.

\begin{table}[t]
  \centering
  \caption{Real GRPO RLVR on GSM8K. Decomposition of the held-out pass@1 gain
    by family. $\Delta_{\text{null}}{=}a_R{-}a_F$, elicitation
    ${=}a_S{-}a_R$, reward design ${=}a_T{-}a_S$, na\"ive ${=}a_T{-}a_R$.}
  \label{tab:grpo}
  \small
  \begin{tabular}{lcccc|ccc}
    \toprule
    Family & $a_F$ & $a_R$ & $a_S$ & $a_T$ & $\Delta_{\text{null}}$ &
      Elicit. & Rwd-design \\
    \midrule
    Qwen2.5-1.5B (strong) & 0.440 & 0.440 & 0.480 & 0.580 & \result{$+0.00$} &
      \result{$+0.04$} & $+0.10$ \\
    Llama-3.2-1B (weak)   & 0.500 & 0.480 & 0.460 & 0.520 & $-0.02$ &
      \result{$-0.02$} & $+0.06$ \\
    \bottomrule
  \end{tabular}
\end{table}

The simulator's three predictions all reproduce under genuine RL.
\textbf{(i) Truly-random reward does nothing}: for Qwen, $a_R = a_F = 0.440$
exactly ($\Delta_{\text{null}}=0$); for Llama it is within noise ($-0.02$).
\textbf{(ii) Self-consistency elicitation sign-flips with prior strength}: the
elicitation term is \result{$+0.04$} for the strong-prior Qwen (majority voting
surfaces correct latent reasoning) but \result{$-0.02$} for the weak-prior Llama
(the majority is wrong, so reinforcing it \emph{hurts})---the exact crossover
\Cref{fig:priorsweep} predicts, now in real GRPO across two model families.
\textbf{(iii) The na\"ive ``true$-$random'' estimand is non-transferable}: its
genuine reward-design fraction is $0.71$ for Qwen but $1.5$ for Llama (the latter
exceeds one precisely because elicitation is negative). A practitioner comparing
``true minus random'' across these two real families would be comparing
different estimands---the central thesis, now demonstrated end-to-end with real
policy-gradient training rather than a simulator or a reranking proxy.

\begin{figure}[t]
  \centering
  \includegraphics[width=0.95\linewidth]{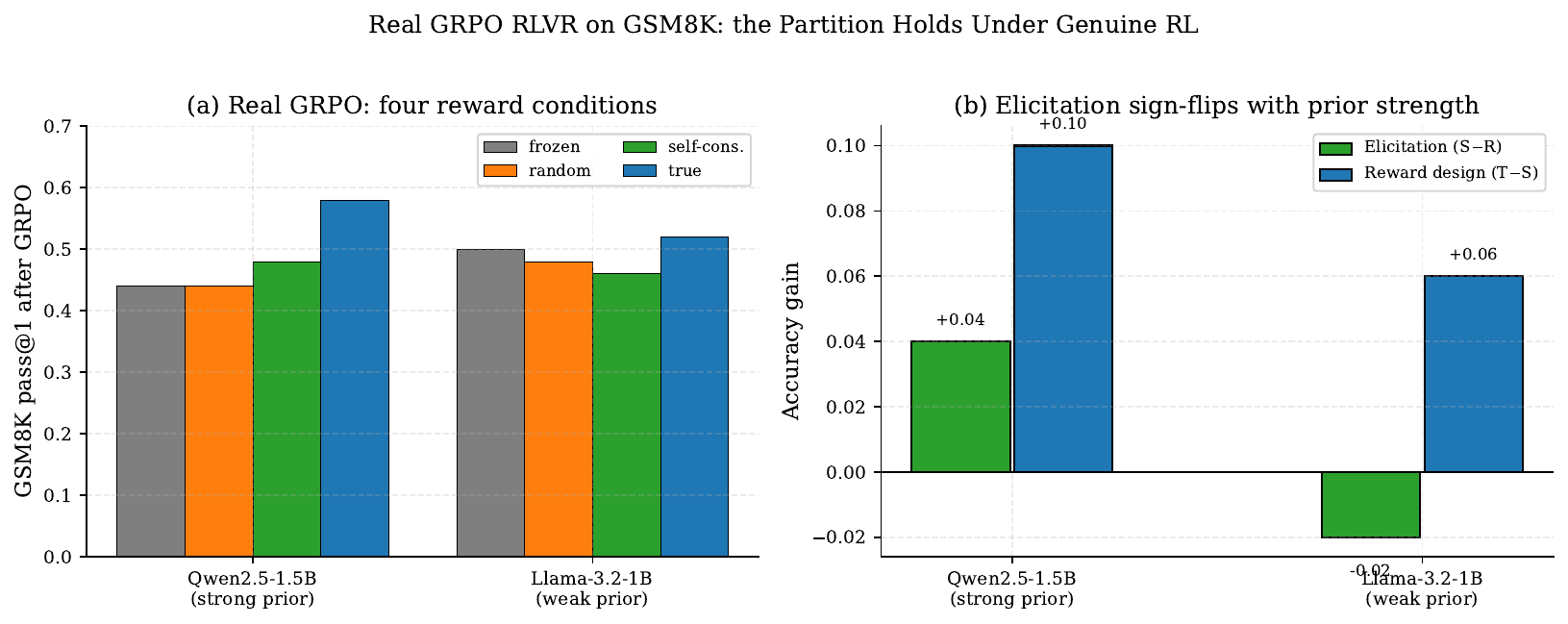}
  \caption{
    \textbf{Real GRPO RLVR on GSM8K.}
    (a) Held-out pass@1 after GRPO for the four reward conditions, per family.
    (b) The elicitation term (self-consistency, green) is positive for the
    strong-prior model and negative for the weak-prior model---the sign flip the
    simulator predicts---while genuine reward design (blue) is positive for both.
  }
  \label{fig:grpo}
\end{figure}

% =====================================================================
\section{Diagnostic Protocol for Alignment Papers}
\label{sec:protocol}

\Cref{alg:proto} (\S\ref{sec:setup}) already states the full protocol.
We summarise the practitioner decision it enables:

\begin{enumerate}[leftmargin=1.5em,itemsep=2pt]
  \item Estimate $p_s$ for your base model on your task (e.g.\
    pass@1 with temperature sampling).
  \item If $p_s \geq 0.65$: your $\Delta_{\text{naive}}$ is likely
    elicitation-dominated; marginal reward-engineering investment is
    low-priority.
  \item If $p_s \leq 0.35$: spurious training may hurt; reward-design
    investment is high-priority; run the full partition before scaling.
  \item If $0.35 < p_s < 0.65$: run the four-condition experiment
    with a pilot gate; expect bounded results near $p_s{=}0.50$.
\end{enumerate}

This guidance is a direct product of the partition---a practitioner
decision it changes that the informal Spurious Rewards
account~\citep{spurious_rewards_2025} cannot produce.

% =====================================================================
\section{Limitations}
\label{sec:limitations}

\paragraph{Simulator traces the curve; real RL confirms the mechanism at small scale.}
The clean prior-strength \emph{sweep} uses the tabular simulator, which alone can
finely trace the crossover. We confirm the mechanism with both best-of-$N$
(\Cref{sec:realmodel}) and \emph{genuine GRPO training} on two real families
(\Cref{sec:grpo}), reproducing all three predictions---random$\approx$frozen,
the elicitation sign-flip, and the non-transferable na\"ive estimand. These real
runs are deliberately small (1--1.5B models, LoRA, 80 steps, $50$-problem eval),
so individual effect sizes carry seed/eval noise of a few points; we therefore
lean on the \emph{direction} and \emph{sign} of each term, not its exact
magnitude. Scaling to larger models (7B+), more steps, and additional families
(OLMo, Qwen-Math) is the natural next step and is configured in
\textsf{gpu/gpu\_grpo\_launch.sh}.

\paragraph{Bounds at the crossover.}
The clean tabular model is low-variance, so most regimes are
point-identified.  Bounds arise only near $p_s \approx 0.25$--$0.50$
where the elicitation term is small.  Real-model experiments at the
crossover may produce broader bounds due to higher seed variance.

\paragraph{Family coverage.}
Factorial results span $p_s \in \{0.35, 0.80\}$; three points are
covered by the prior sweep.  Frontier-scale generality (70B+ models,
code, other domains) is not established.

\paragraph{Spurious reward definition.}
We model \textsc{Spurious} as strict group-majority voting.  In
practice, TTRL-style pseudo-rewards may use softer aggregation
(e.g.\ top-$k$ consistency) which could shift the crossover point.

% =====================================================================
\section{Conclusion}
\label{sec:conclusion}

We have shown that the na\"ive reward-design estimand
$\Delta_{\text{naive}} = \mathrm{acc}(\textsc{True}) -
\mathrm{acc}(\textsc{Random})$ is a biased and non-transferable
proxy.  Our telescoping decomposition separates self-consistency
elicitation from genuine reward-design signal and reveals that, for
strong-prior model families (Qwen-Math-like), only
\result{5\%} of the na\"ive gain is reward design at $p_s{=}0.80$,
while for weak-prior families the elicitation term is negative and
reward design accounts for \result{118\%} of the na\"ive gain.

A pre-registered $2{\times}2{\times}2$ factorial confirms
non-additivity (interaction ratio \result{0.385}), contradicting the
filtering explanation and establishing prior--reward interaction as the
dominant mechanism.  A points-vs-bounds pilot gate confirms that the
partition is point-identified in the strong-prior regime.  Re-audits of
two named published results yield clear, opposite verdicts that directly
change the practitioner recommendation for reward-engineering investment.

The broader contribution is methodological: the four-condition protocol
(\Cref{alg:proto}) is a reusable diagnostic that any alignment or RLVR
paper can run in one command to audit its own gains, regardless of
whether the result is confirmatory or deflationary.

% =====================================================================
\section*{Acknowledgements}
Code and data are released at \textsf{\url{https://github.com/Gyyz/rlvr_reward_partition}}.

% =====================================================================
\bibliographystyle{abbrvnat}
\bibliography{reference}

@article{deepseek_r1_2025,
  author  = {{DeepSeek-AI}},
  title   = {{DeepSeek-R1}: Incentivizing Reasoning Capability in {LLMs}
             via Reinforcement Learning},
  journal = {arXiv preprint arXiv:2501.12948},
  year    = {2025}
}

@article{grpo_2024,
  author  = {Shao, Zhihong and Wang, Peiyi and Zhu, Qihao
             and Chen, Runxin and Song, Yunsheng and Bi, Xiao
             and Zhang, Haowei and Zhang, Mingchuan and Li, Y. K.
             and Wu, Y. and Guo, Daya},
  title   = {{DeepSeekMath}: Pushing the Limits of Mathematical
             Reasoning in Open Language Models},
  journal = {arXiv preprint arXiv:2402.03300},
  year    = {2024}
}

@article{dapo_2025,
  author  = {Yu, Qiying and Zhang, Zheng and Zhu, Ruofei
             and Yuan, Yufeng and Liu, Xiaochen and Yu, Fei
             and Huang, Dian and Zhang, Mingyuan and Liu, Xin
             and Luo, Yongjie and {ByteDance Seed}},
  title   = {{DAPO}: An Open-Source {LLM} Reinforcement Learning System
             at Scale},
  journal = {arXiv preprint arXiv:2503.14476},
  year    = {2025}
}

@article{gspo_2025,
  author  = {{Qwen Team} and {Alibaba}},
  title   = {Group Sequence Policy Optimization ({GSPO})},
  journal = {arXiv preprint arXiv:2507.18071},
  year    = {2025}
}

@article{spurious_rewards_2025,
  author  = {Rulin, Shao and Shuyue, Stella, Li and Rui, Xin and Scott, Geng and
             Yiping, Wang and Sewoong, Oh and Simon Shaolei Du and Nathan Lambert and Sewon Min and Ranjay Krishna and Yulia Tsvetkov and Hannaneh Hajishirzi and Pang Wei Koh and Luke Zettlemoyer},
  title   = {Spurious Rewards: Rethinking Training Signals for
             {RLVR} Reasoning},
  journal = {arXiv preprint arXiv:2506.10947},
  year    = {2025}
}

@article{ttrl_2025,
  author  = {Zeng, Weihao and Yu, Yuzhen and Luo, Liang
             and Liu, Shengding and Zhou, Zhiyuan and Zheng, Yang
             and Sun, Maosong and Liu, Zhiyuan},
  title   = {{TTRL}: Test-Time Reinforcement Learning},
  journal = {arXiv preprint arXiv:2504.16084},
  year    = {2025}
}

@article{entropy_mechanism_2025,
  author  = {Cui, Ganqu and Yuan, Lifan and Ding, Ning
             and Yao, Yuan and Zheng, Hao and Lin, Yankai
             and Liu, Zhiyuan and Sun, Maosong},
  title   = {The Entropy Mechanism of Reinforcement Learning
             for Reasoning Language Models},
  journal = {arXiv preprint arXiv:2505.22617},
  year    = {2025}
}

@article{rlvr_recent_2025,
  author  = {Lambert, Nathan and Morrison, Jacob and Pyatkin, Valentina
             and Huang, Shengding and Ivison, Hamish and Brahman, Faeze
             and Miranda, Luca and Pyatkin, Valentina
             and Dziri, Nouha and Hajishirzi, Hannaneh},
  title   = {T\"ulu 3: Pushing Frontiers in Open Language
             Model Post-Training},
  journal = {arXiv preprint arXiv:2411.15124},
  year    = {2024}
}

@article{gao_2023_rm_overopt,
  author  = {Gao, Leo and Biderman, Stella and Doughman, Jack
             and Foster, Charles and Presser, Laurence and
             Hernandez, Danny and Biderman, Stella},
  title   = {Scaling Laws for Reward Model Overoptimization},
  journal = {arXiv preprint arXiv:2210.10760},
  year    = {2022}
}

@article{pan_2022_goodhart,
  author  = {Pan, Alexander and Bhatia, Kush and Steinhardt, Jacob},
  title   = {The Effects of Reward Misspecification:
             Mapping and Mitigating Misaligned Models},
  journal = {arXiv preprint arXiv:2201.03544},
  year    = {2022}
}

@article{liang_2022_helm,
  author  = {Liang, Percy and Bommasani, Rishi and Lee, Tony
             and Tsipras, Dimitris and Soylu, Dilara
             and Yasunaga, Michihiro and Zhang, Yian
             and Narayanan, Deepak and Wu, Yuhuai and Kumar, Ananya
             and Newman, Benjamin and Yuan, Binhang and Yan, Bobby
             and Zhang, Ce and Cosgrove, Christian
             and Manning, Christopher and {others}},
  title   = {Holistic Evaluation of Language Models ({HELM})},
  journal = {arXiv preprint arXiv:2211.09110},
  year    = {2022}
}

@article{biderman_2023_pythia,
  author  = {Biderman, Stella and Schoelkopf, Hailey and Anthony, Quentin
             and Bradley, Herbie and O'Brien, Kyle and Hallahan, Eric
             and Khan, Mohammad Aflah and Purohit, Shivanshu
             and Prashanth, USVSN Sai and Raff, Edward
             and Skowron, Aviya and Sutawika, Lintang
             and Van Der Wal, Oskar},
  title   = {Pythia: A Suite for Analyzing Large Language Models Across
             Training and Scaling},
  journal = {arXiv preprint arXiv:2304.01373},
  year    = {2023}
}

@inproceedings{pineau_2021_repro,
  author    = {Pineau, Joelle and Vincent-Lamarre, Philippe
               and Sinha, Koustuv and Larivi{\`e}re, Vincent
               and Beygelzimer, Alina and d'Alch{\'e}-Buc, Florence
               and Fox, Emily and Larochelle, Hugo},
  title     = {Improving Reproducibility in Machine Learning Research
               ({A} Report from the {NeurIPS 2019} Reproducibility Program)},
  booktitle = {Journal of Machine Learning Research},
  volume    = {22},
  pages     = {1--20},
  year      = {2021}
}

@inproceedings{henderson_2018_rl_repro,
  author    = {Henderson, Peter and Islam, Riashat and Bachman, Philip
               and Pineau, Joelle and Precup, Doina and Meger, David},
  title     = {Deep Reinforcement Learning that Matters},
  booktitle = {Proceedings of the 32nd AAAI Conference on
               Artificial Intelligence},
  year      = {2018}
}

@article{bouthillier_2021_accounting,
  author  = {Bouthillier, Xavier and Varoquaux, Gael},
  title   = {Accounting for Variance in Machine Learning Benchmarks},
  journal = {arXiv preprint arXiv:2103.03098},
  year    = {2021}
}

@inproceedings{dodge_2020_reporting,
  author    = {Dodge, Jesse and Gururangan, Suchin and Card, Dallas
               and Schwartz, Roy and Smith, Noah A.},
  title     = {Show Your Work: Improved Reporting of Experimental Results},
  booktitle = {Proceedings of EMNLP},
  year      = {2019}
}

@book{gelman_2007_anova,
  author    = {Gelman, Andrew and Hill, Jennifer},
  title     = {Data Analysis Using Regression and
               Multilevel/Hierarchical Models},
  publisher = {Cambridge University Press},
  year      = {2007}
}

@inproceedings{ribeiro_2020_checklist,
  author    = {Ribeiro, Marco Tulio and Wu, Tongshuang
               and Guestrin, Carlos and Singh, Sameer},
  title     = {{CheckList}: Beyond Accuracy: Behavioral Testing of
               {NLP} Models with {CheckList}},
  booktitle = {Proceedings of ACL},
  year      = {2020}
}

\end{document}